\documentclass[preprint]{elsarticle}  
\usepackage[utf8]{inputenc}
\usepackage{subcaption}
\usepackage{graphicx}
\usepackage[dvipsnames]{xcolor}
\usepackage{subcaption}

\usepackage{amsmath}
\usepackage{amssymb}

\usepackage{amsthm}

\usepackage[normalem]{ulem} 

\graphicspath{{.}{figures/}}		

\definecolor{usedBefore}{rgb}{0.7, 0.2, 0.4}
\definecolor{blau}{rgb}{0.0, 0.0, 0.6}
\definecolor{orange}{rgb}{0.79, 0.521, 0}

\definecolor{koenigsblau}{rgb}{0.01, 0.28, 1.0}
\definecolor{aoenglish}{rgb}{0.0, 0.5, 0.0}
\definecolor{bleudefrance}{rgb}{0.19, 0.55, 0.91}
\ifdefined\DEBUG
  \newcommand{\mf}[1]{{\color{orange}\textbf{MF}: #1}}
  \newcommand{\pe}[1]{{\color{bleudefrance} (\textbf{PE:} #1)}}

  \newcommand{\patrickdel}[1]{{\color{bleudefrance} \sout{#1}}}
  \newcommand{\bn}[1]{{\color{aoenglish} (\textbf{BN:} #1)}}
  
  \newcommand{\bndel}[1]{{\color{aoenglish} \sout{#1}}}
  \newcommand{\emmerix}[1]{{\color{koenigsblau} (\textbf{ME:} #1)}}
  
  \newcommand{\emmerixdel}[1]{{\color{koenigsblau} \sout{#1}}}
  \newcommand{\martina}[1]{{\color{purple} (\textbf{MF:} #1)}}
  
  \newcommand{\martinadel}[1]{{\color{purple} \sout{#1}}}
  \newcommand{\tb}[1]{{\color{magenta}\textbf{TB}: #1}}
  \newcommand{\tbdel}[1]{{\color{purple} \sout{\textbf{TB}: #1}}}
\else
  \newcommand{\mf}[1]{}
\newcommand{\pe}[1]{}
  \newcommand{\bn}[1]{}
  
  \newcommand{\bndel}[1]{}
  
  \newcommand{\patrickdel}[1]{}
  
  \newcommand{\emmerix}[1]{}
  
  \newcommand{\emmerixdel}[1]{}
  \newcommand{\martina}[1]{}
  
  \newcommand{\martinadel}[1]{}
  \newcommand{\tb}[1]{}
  \newcommand{\tbdel}[1]{}
\fi

\setlength{\parindent}{0pt}
\setlength{\parskip}{1ex}

\title{Optimally Weighted Ensembles of Regression Models: Exact Weight Optimization and Applications}
\author{Patrick Echtenbruck${}^*$}
\author{Martina Echtenbruck${}^*$}
\author{Joost Batenburg${}^*$}
\author{Thomas B{\"a}ck${}^*$}
\author{Michael Emmerich${}^*$}
\author{Boris Naujoks${}^+$}
\address{${}^*$LIACS, Leiden University, Niels Bohrweg 1, The Netherlands, \\
${}^+$IDE+A, TH Cologne, 51643 Cologne Gummersbach, Germany }


\begin{document}
\begin{abstract}
Automated model selection is often proposed to users to choose which machine learning model (or method) to apply to a given regression task. In this paper we show that combining different regression models can yield better results than selecting a single ('best') regression model, and outline an efficient method that obtains optimally weighted convex linear combination from a heterogeneous set of regression models. More specifically, in this paper a heuristic weight optimization, used in a preceding conference paper, is replaced by an exact optimization algorithm using convex quadratic programming. We prove convexity of the quadratic programming formulation for the straightforward formulation and for a formulation with weighted data points. The novel weight optimization is not only (more) exact but also more efficient. The methods we develop in this paper are implemented and made available via github-open source. They can be executed on commonly available hardware and offer a transparent and easy to interpret interface.  The results indicate that the approach outperforms model selection methods on a range of data sets, including data sets with mixed variable type from drug discovery applications.
\end{abstract}    
\maketitle
\section{Introduction}
A common task in machine learning is to build a regression (or surrogate) model of a black-box function based on a set of known evaluation results of that function at some points. The  regression model can be used to partially replace the original function, for instance in cases where the original function is expensive to evaluate or evaluations are difficult to obtain for other reasons.

Surrogate models play a significant role in modern optimization, prediction, modeling or simulation tools. In recent years various types of surrogate models have been proposed in the machine learning literature and integrated as options in machine learning libraries. It remains however difficult for users to select the right method for the given data-set and often this model selection problem is solved by experimenting with different models, looking at the training error. 

Automatic model selection methods relief the user from this task. It can be accomplished by, for instance, ranking models based on the cross-validation error on the training data set. However, recent research has shown that one can do better than merely selecting the best model from the ensemble by
combining several of the available regression models. Whereas many of these methods rely on overly simplistic assumptions (majority vote, averaging) or introduce a large amount of additional complexity (genetic programming, symbolic regression), the recently proposed \emph{optimally weighted model mixtures} provide a good compromise between simplicity, flexibility and integration of various regression models~\cite{friesebuilding}.
Moreover, it is a model-agnostic approach and can combine regression models of various types, such as for instance artificial neural networks, Gaussian process regression, piecewise linear regression, and random forests. This paper provides a significant extension of our earlier work on such model mixtures \cite{Frie20a}. 

In particular, this paper deals with ensembles of regression models that use a linear combination and offer an interpretable, generally applicable, and  efficient approach for combining models. Instead of simply selecting the prediction of the best model based on the cross-validation error, we propose to combine the model predictions of various regression models in an optimal way considering covariance information. In the formulation we restrict ourselves to convex linear model combinations, which are easy to interpret and can be configured efficiently. This article has three novel contributions:

\begin{itemize}
\item Firstly, we show how to adapt the approach to data sets where sample points are unevenly distributed. This is accomplished by a linear weighting scheme for data points based on point density that can be integrated into the exact solution method without a significant increase in complexity.

\item Secondly, we replace the heuristic optimization procedure used in a previous paper by an exact method, using a quadratic programming (QP) formulation that can be solved efficiently by standard QP solvers.

\item Thirdly, we apply and test the method on a range of data sets, including examples from drug discovery that require the combination of regression models for high dimensional functions with discrete variable types.
\end{itemize}

This article is structured as follows: In Section~\ref{preliminaries} we introduce the preliminaries needed for the theoretical part of the paper. In Section~\ref{relatedworks} we discuss previous and related works. In the Sections~\ref{clustering} to~\ref{quadratic} we develop the approach presented in this paper. Firstly, Section~\ref{clustering} discusses the relevance of clustering in the data and presents a method to effectively handle clustered data. Secondly, Section~\ref{n-aryEnsembles} discusses the need for large ensemble sets and illustrates that the search space for the optimal ensemble setup is convex, while Section~\ref{quadratic} gives the mathematical proof that the problem of finding optimal weights is convex 
and the error minimizing weights combination can directly be calculated by quadratic programming. In Section~\ref{Application to Drug Property Prediction} the adapted ensemble algorithm is tested on classical regression data sets as well as on different data sets from drug property prediction. The results of these experiments are presented and discussed in Section \ref{ResultsDiscussion}. Main aspects and findings of this paper are summarized and possible future works are discussed in Section~\ref{SummaryOutlook}.
\section{Preliminaries}
\label{preliminaries}


~\\
In machine learning, it is a common task to model a given objective function in order to classify unknown points or to predict promising parameter settings. In both cases, costly function evaluations on the original models are reduced by partly replacing them with fast approximate evaluations on surrogate models. By \textit{surrogate model}, we understand a function $\hat{f}: \mathbb{R}^d \rightarrow \mathbb{R}$ that is an approximation to the original function $f: \mathbb{R}^d \rightarrow \mathbb{R}$, learned from a finite set of evaluations of the original function.

The number of available modeling algorithms, all featuring different strengths and weaknesses, for the user to choose from is large. However, the choice of the model is crucial for the solution quality. Burnham et al. even state that the selection of the right surrogate model is the most crucial question in making statistical inferences~\cite{Burn2002a}.
To choose the best surrogate model for a given objective function, often expert knowledge about the surrogate model as well as the objective function is needed. But if no preliminary knowledge about the surrogate model or the objective function is available, it would be beneficial if an algorithm could learn which surrogate model suits best to a given problem.

Model selection approaches, which a priori train a set of models on training data and then choose the best surrogate model using statistical approaches, are already well established. However, recent results show that it can be beneficial to linearly combine several surrogate models into one better-performing ensemble model~\cite{friesebuilding,Bart16n, bates1969combination}. 
Further constraints on the ensemble coefficients lead to convex combinations, not to be confused with convex functions, and can be defined as follows. Given $s$ different surrogate models $\hat{f}_i: \mathbb{R}^d \rightarrow \mathbb{R},~ i=1, \dots, s$, $d$ the input dimension of the approximated functions and $\alpha_i$ the weights for the $i$-th surrogate model, a \textit{convex combination of models} (CCM) specifies an ensemble of surrogate models as follows:
\[\sum_{i=1}^s \alpha_i \hat{f}_i \text{\quad s.t. } \sum \alpha_i =1 \text{ and } \alpha_i \geq 0, \; i =1, \dots, s\]

Due to the summing up to unity constraint, the search space of model weights is an $(s-1)$-dimensional simplex. Special solutions, where only one model is used in the "ensemble", are located in the corners of the simplex.
To find the optimally weighted ensemble a minimization of the cross-validation error over the set of possible CCMs can be performed by searching over the simplex $\{\alpha \in \mathbb{R}^s| \sum_{i=1}^s \alpha_i =1, \alpha_i \geq 0  \}$.

As cross-validation error or fitness function resp., an adaptation of the root mean squared error (RMSE) is regularly used. The RMSE of the predictions of a single model is defined as $RMSE=\sqrt{\frac{1}{n}\sum_{i=1}^{n}{(y_i-\hat{y}_i)^2}}$, where $n$ is the number of predictions of this model. As a remark, it suffices to find the minimizer of the squared sum $\sum_{i=1}^{n}{(y_i-\hat{y}_i)^2}$ and it is equivalent to the minimizer of the $RMSE$.

\section{Related Work}
\label{relatedworks}
The general scenario that is investigated in our studies is the automated selection and optimal mixture of prediction or regression models. Ensemble methods have a long history and some early work has already been presented in the 1970ties \cite{bates1969combination} for time series predictions. 

The work presented in this article is a follow up and extension to an earlier conference paper \cite{friesebuilding}, where the idea of building sparse ensembles by mixtures or convex combination of heterogeneous models has been proposed and studied on a  benchmark set with multi-modal regression problems and a heuristic optimization with an (1$+$1)-ES is used to optimize the weights of the ensemble. In ~\cite{Frie20a} a review of other types of ensemble approaches is given. 


The approach in \cite{friesebuilding} was recently taken up in \cite{benitez2021sparse} in the context of predicting time series from the COVID-19 pandemic and an alternative penalizing term was introduced that is similar to the approach used in the LASSO regression model, where instead of models, variables are selected based on their marginal distribution. 

The new approach of our article extends the work by Friese et al.~\cite{friesebuilding} by the following major contributions. Firstly we show that finding optimal model mixtures is a convex quadratic optimization problem that can be solved to optimality, even if density of points is considered (weighted training set). Secondly, the approach is used on a broader data-set including discrete variables and data from the important application domain of molecular property prediction in drug discovery.

 Acar and Rais-Rohani~\cite{Acar2009a} proposed some adaptations of previously defined approaches of weighted sum ensembles for better generalization. Like in previous works they also required the weights to be positive and sum up to one as the only constraints on the weights. Building on the approach of Bishop et al.~\cite{Bish1995a}, they proposed to use k-fold cross-validation to allow for an evaluation of models that have per definition no error at the training points. For the works of Goel et al.~\cite{Goel07} they suggested to generalize the mean squared error\footnote{The GMSE refers to the MSE applied in a leave-one-out cross-validation process.}(GMSE) of the ensemble.



\section{Local Density Weighted Cross-Validation}  
\label{clustering}
The ensemble method, as first presented by Friese et al.~\cite{friesebuilding}, performed all experiments on mathematical test functions generated by Max-Set of Gaussian Landscape Generator \cite{Gallagher2006}. These test functions are trained on data-sets generated automatically using Latin Hypercube Design \cite{McKay1979}.
However, in real-world problem definitions, we cannot rely on the assumption that the available data is likely evenly spread. In sequential parameter optimization, it is even expected to build clusters of evaluated points at local optima.
Without taking this possible clustering of the points into account during cross-validation we risk to put too much emphasis on prediction errors in the area of clustered points, which leads to over-fitting in these areas.

One potential solution could be to exclude individual points situated in clustered areas from the set before evaluation to ensure an even distribution of the points in the data set before cross-validation. However, to do so it must be specified how many and which points will be excluded from the set. But, by doing so important information may be dropped from the data set and the surrogate models may perform better with the complete data-set.

Therefore, we propose to keep all data points and weight the squared error of points in denser areas to reduce the importance of these points. The given weights depend on the density of the direct neighborhood of each point and hence of the position of the regarded point. This way, the weighting occurs smoothly and without harsh steps between the weights of neighboring points. 
To this end, we use density weighted cross-validation applying a weighted Root Mean Square Error (wRMSE) as a quality indicator:\\ 
\begin{equation}
    wRMSE=\sqrt{\frac{1}{n}\sum_{i=1}^{n}{(\beta_i(y_i-\hat{y}_i))^2}}
\end{equation}


The weights ~$\beta_i\in[0,1]$ that are applied to the squared prediction errors $(y_i-\hat{y}_i)^2$ of their related predictions $\hat{y}$ at the positions $\mathbf{x}_i\in\mathbb{R}^d$, ~$i=1,\dots,n$, $n$ the number of predictions, are derived from the proximity of the point's nearest neighbors, and thus the density of the points direct neighborhood. For the calculation of this density, the $k$ nearest neighbors of each point are utilized. Per default, $k$ is set to $20$. 

These $k$ nearest neighbors are utilized to calculate the density $dens_i$ of a point $x_i$ as their median Euclidean distance to this point:
\tb{$\mathbf{x}_i$ again.}

\begin{equation}
    dens_i = \text{median}\left\{\sqrt{\sum_{j=1}^{d}{(\mathbf{x}_{ij}-\mathbf{x}_{lj})^2}} \, \left|\, l=1,\dots, k~\right.\right\}
\end{equation}

Density values that are exceeding the overall mean density are truncated to the mean value. This way, it is ensured that points not located in sparse areas get higher weight during cross-validation, and on the other hand points in highly crowded regions receive a lower weight to prevent over-representation of these regions in the global modelling of the response function.  The resulting density values at the point locations will be denoted with $dens_i$. 
\tbdel{"The resulting" means after truncation, but what do you mean by "receive a lower weight"? That is implicitly happening, right? However, I would not call them $dens_i$, too, as they are not identical.}
They are normalized to the $[0,1]$ range by computing $\beta _i=dens_i/\max\{dens_0,\dots ,dens_n\}$ in order to obtain the weights $\beta_i\in[0,1]$. 
\tb{I thought we had $n$ points/density values, now we have $n+1$?}
Here, the lower bound does not have to be considered since the density values $dens_i$ 
\tbdel{why now an index $j$?}
are calculated from distance values and thus are positive per definition. Also, it is not intended to force zero weight on points with the highest density.

Applying these weights to the prediction errors in the calculation of the RMSE during cross-validation ensures that all points with a neighborhood not denser than the mean density are considered with full weight and only points located in denser neighborhoods are weighted according to the density of their neighborhoods, while the weighting during the transition from sparser areas to clustered areas is smooth.



\begin{figure}[!htb]
  \centerline{\includegraphics[width=\textwidth]{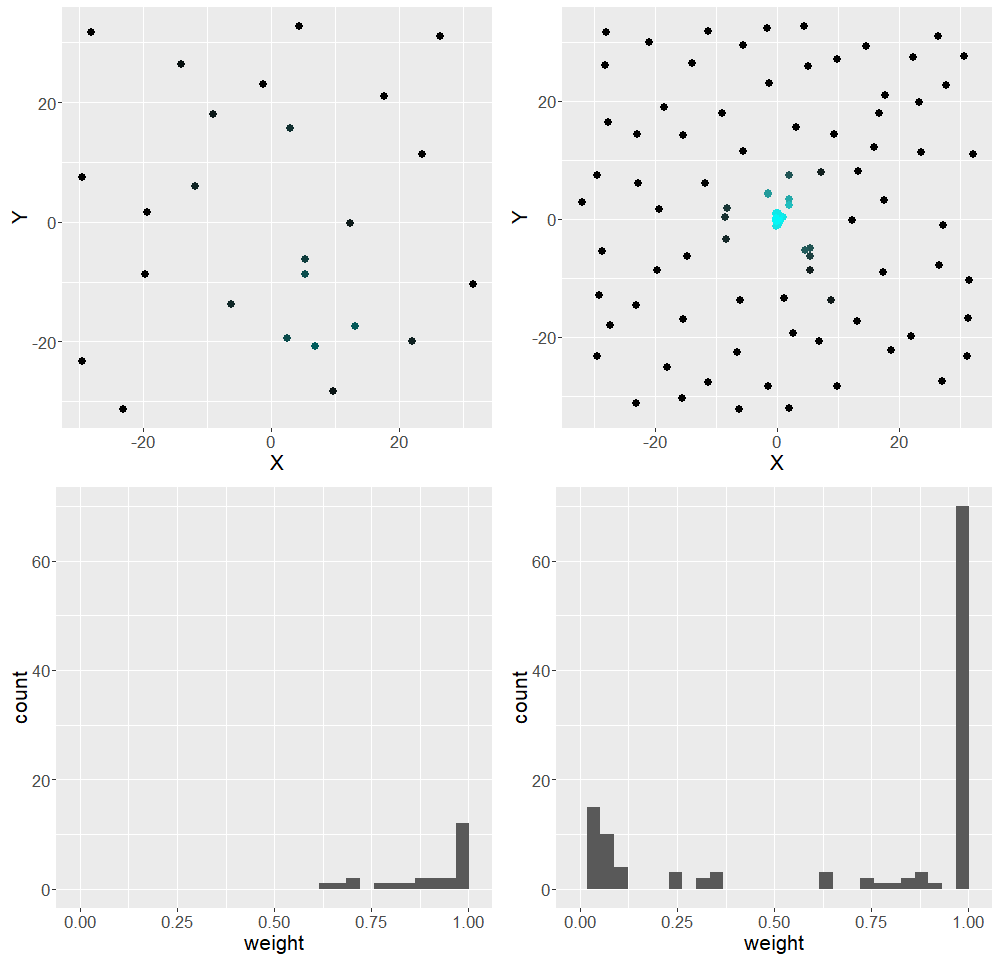}}
  \caption{The plots show the impact of the weighting procedure on the points at the beginning and the end of an optimization process on a 2D Ackley function. Points that are colored black are fully taken into account, and lighter blue points are weighted. The lower row shows the related distribution of weights used. By the light blue color of the cluster it can be seen that points near the cluster are rigorously weighted.
  \tbdel{what does "rigorously" mean here? How do we see that "near cluster" = "rigorously weighted"?}}
  \label{fig:weightingOfDatapoints}
\end{figure}


Figure \ref{fig:weightingOfDatapoints} illustrates the impact of the weighting on the points on an exemplary optimization situation. The example uses two states of the same optimization process that used a Latin Hypercube Design for the initial setup. Both plots on the left-hand side show the situation at the beginning of the optimization process, while the figures on the right-hand side illustrate the situation at the end of the optimization process. 
On the left-hand side, the points are evenly distributed over the search space, and, as the histogram in the lower row shows, most points get the full weight. Only a few points are lightly weighted.
On the right-hand side, the points have clustered around the local optima. Again the histogram shows that most points get the full weight, but some points, colored turquoise in the upper row, are considered with a higher weight.

\section{Ensemble Methods}\label{n-aryEnsembles}
\tbdel{What is $N$?} \pe{N-ary means "n entities", so ensembles consisting of n entities/models}
\tb{That was my point, so we should say $n$-ary Ensembles, if $N=n$?}\martina{The n/N was only a capital letter because it was the first letter of the title. But if this is deceptive we might better change it to n.}
In real-world regression or optimization problems and in black-box regression or optimization, the user often faces the problem that the main characteristics of the objective function are not known. Additionally, a wide range of potentially applicable surrogate models is available. Each of these models is based on specific model assumptions that define its strengths and weaknesses. If any, only very few users are acquainted with all of them. However, the right choice of a surrogate model for a specific problem definition is crucial for the quality of the regression or optimization result, respectively.
The same applies for the use of ensembles of surrogate models. Or even more so, since not only the most appropriate models need to be in the set but also weaker performing models, since they may still be able to compensate weaknesses of the stronger models. Hence, a large set of available surrogate models ought to be the best starting point for a best performing ensemble approach.

However, as the number of models grows, the complexity of finding the best model combination grows. Friese et al.~\cite{friesebuilding} applied a simple (1+1)-ES to search for the best model mixture on a set of three surrogate models, which showed promising results in reasonable time. Nevertheless, this approach deteriorates as the number of surrogate models, and with it, the number of dimensions of the search space grows. 

Echtenbruck \cite{Frie20a} showed that the algorithm completely failed to find a known good solution for three models in a set with additional ten models. 
Echtenbruck presented an incremental adaptation of the search strategy that searches for improving solutions in the different search dimensions successively. 
However, the improvement of the performance of the search algorithm goes along with an increased computation time for the search, linear in the number of surrogate models in the set. A more efficient approach is desirable.

Originally, the (1+1)-ES was chosen for its capability to perform well on convex functions as well as on non-convex or even multi-modal functions.
However, previously realised experiments suggested the assumption that the search space for the optimal weighting might be convex \cite{friesebuilding,Frie20a}. Also closer inspection of the applied error function conjectures that the optimization problem, specified by the MSE, is convex, noting that the minimizer of the MSE function coincides with the minimizer of the RMSE function.

In the following we show that the function is indeed convex. First we will motivate this heuristically, followed by a rigorous mathematical proof.

The RMSE is calculated based on an unordered set of prediction errors (deviation between prediction and true value) and is therefore independent from the structure of the underlying objective functions and the models used. Only the (weighted) mean of the squared deviations is to be minimized.
Considering the mixture of two models, the three following cases can be distinguished when a prediction for a single point is made:  
\begin{itemize}
\item[]{\bf Case 1}: both models deliver the same prediction. 
\item[]{\bf Case 2}: they deliver different predictions but both are either smaller or larger than the objective function value or, 
\item[]{\bf Case 3}: they deliver different predictions, but one model predicts larger and one smaller values than the objective function value.
\end{itemize}
Examples for these cases are shown in Figure \ref{fig:predWithErrors}. The Figure depicts predictions and errors of the ensembles that would result from a convex linear combination, using the mixture of models approach, as well as their resulting prediction errors. 

\begin{figure}[!htb]
  \centerline{\includegraphics[width=\textwidth]{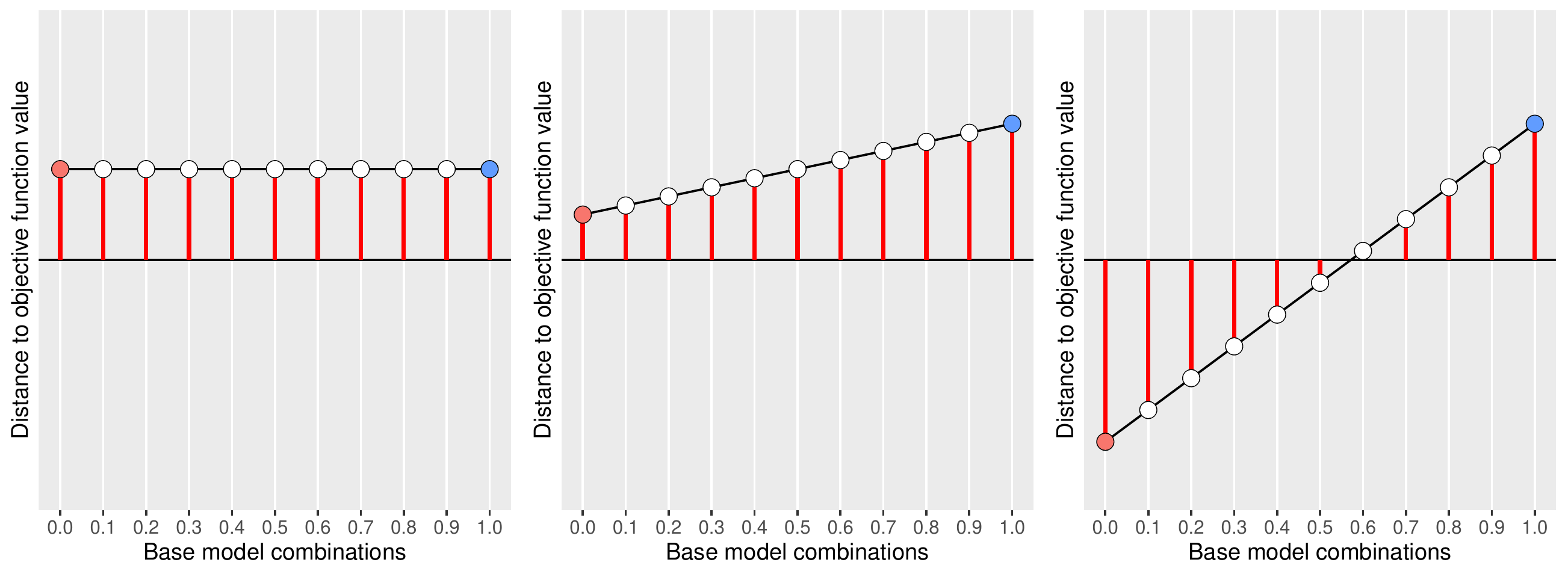}}
  \caption{The plots show the three different cases of prediction combinations of two base models for a single point. The horizontal line marks the distance zero to the objective function value \tbdel{I think it marks the distance zero to the objective function value, i.e., a perfect prediction.}\martina{and I would say the horizontal line marks the actual value of the considered point}, the dots mark the predictions of the different models. Here the red and the blue dot depict the base models, while the white dots depict the predictions of the ensembles that originate from the convex linear combination of the two base models. The red lines illustrate the prediction errors that result from the predictions of these models.
  }
  \label{fig:predWithErrors}
\end{figure}

For the calculation of the RMSE, the mean of the squared errors is considered. Since $\sqrt{x}$ is a bijective function and strictly monotonic for $x>0$, the square root can be omitted since it has no influence on the position of the minimizer (the point where the function obtains its minimum). Following previous thoughts, there are also three possible types of functions for the squared errors that have to be considered, these are shown in Figure \ref{fig:squaredErrors}.

\begin{figure}[!htb]
  \centerline{\includegraphics[width=\textwidth]{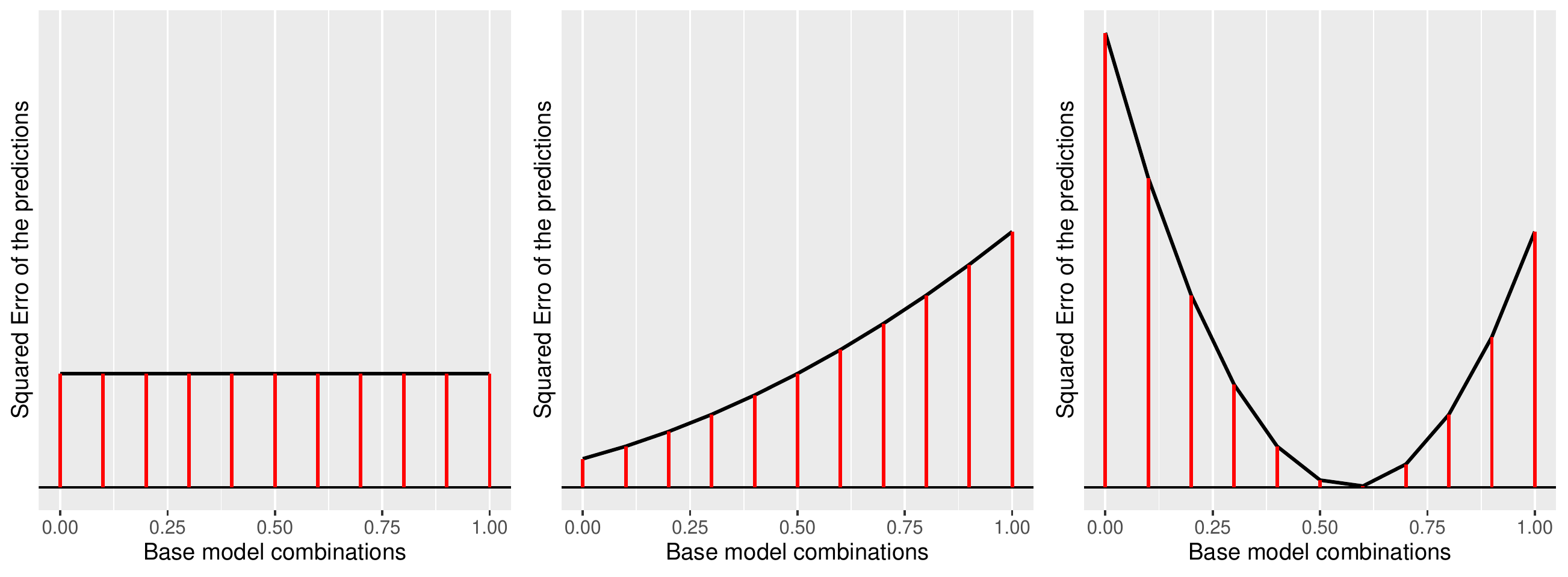}}
  \caption{The plots show the corresponding squared errors for the predictions shown in Figure \ref{fig:predWithErrors}, depicted as red lines. The black line depicts the resulting search space for the optimal convex combination of the two base models.
  }
  \label{fig:squaredErrors}
\end{figure}

If both models predict the same function value (Case 1) all predictions of the related convex combination models will be the same, and the resulting function of squared prediction errors is therefore also constant. In Case 2, where  both models predict different values, but both are either larger or smaller than the actual objective function value, the resulting function of squared errors will be quadratic, monotonic, and the optimum will be obtained at the boundary (meaning that in this case one of the models will be weighted with zero). 
In Case 3, as depicted in the right hand side plot of Figure \ref{fig:squaredErrors}, one model predicts larger and one smaller values than the actual objective function value. The resulting function of squared errors will be a quadratic function which obtains its minimum (of zero) in the interior of the interval, at the position where the convex combination of the base models would predict the actual function value.

The RMSE function that defines the search space for the optimal mixture of these two models is given by building the mean of $n$ squared-error functions, where $n$ is the number of training points. Adding a constant function type prediction of a new point (Case 1) to this mean would result in a compression of the original function that does not change the position of its minimizer. Adding an ascending or descending function type prediction (Case 2) to the mixture would result in a horizontal shift and if any in a horizontal compression of the mixture function. Adding a function, containing a minimizer at another position than contained in the mixture (Case 3), would result in a horizontal shift of the minimizer and if any in a horizontal dilation or compression of the existing mixture function. Figure \ref{fig:Mixtures} illustrates these cases. Algebraically, the superposition of a quadratic function and a quadratic function results in a quadratic function. 

\begin{figure}[!htb]
  \centerline{\includegraphics[width=\textwidth]{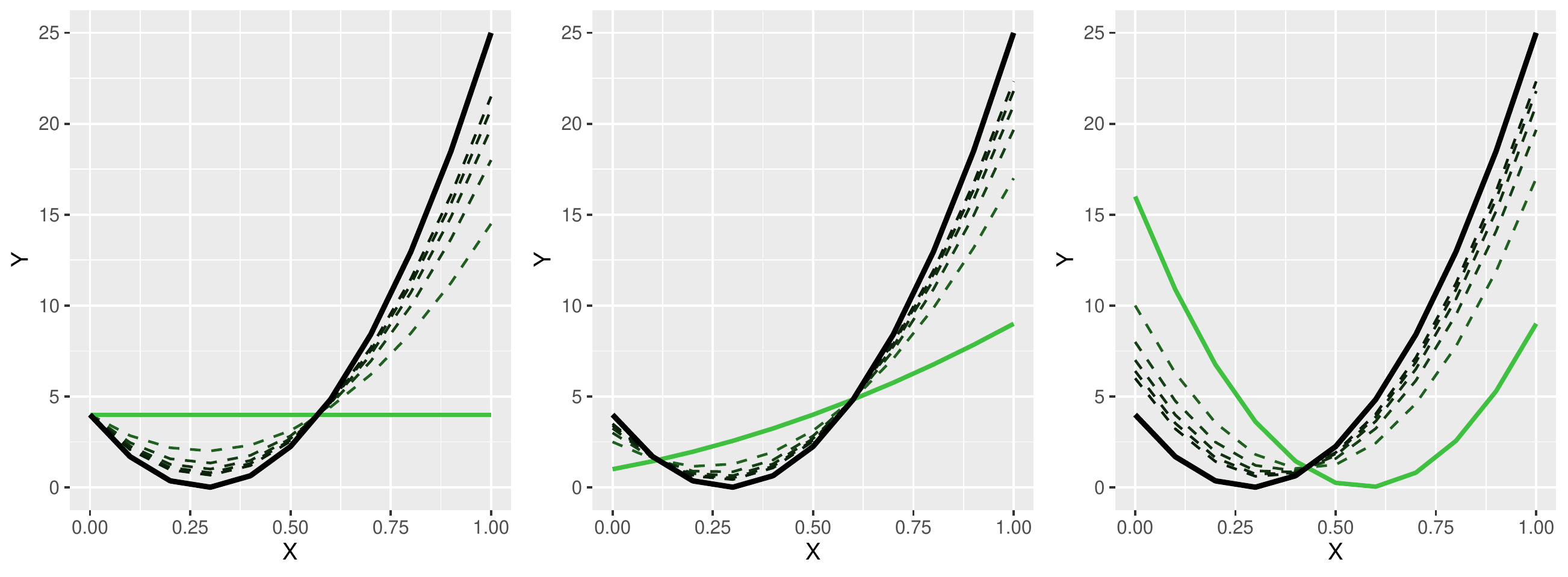}}
  \caption{In these plots the black lines represent a function that was created by building the mean of several functions. The green lines represent the functions from the three cases introduced in Figure \ref{fig:squaredErrors}. The dashed lines illustrate how the mean function would evolve if the function represented by the green line was added to the mean. Different examples are given in case the mean of the black line was built from 1,2,3,4 or 5 functions. Of course the influence of the added function would decrease with a higher number of already contained functions.}
  \label{fig:Mixtures}
\end{figure}

These thoughts are so far restricted to the combination of two models. However, any line segment in the search space is bounded by two end points that in itself are model-mixtures and thus models. Therefore also on this line segment the function is convex and quadratic, as in the case of two models discussed above. 

A mathematical proof, that the regarded search space is actually convex, is given in the following chapter.
\section{Exact Quadratic Programming Method}
\label{quadratic}
Let $\hat{f}_1,\ldots,\hat{f}_s: \mathbb{R}^d \to \mathbb{R}$ be the set of surrogate models. 

Let $\mathbf{x}_1,\ldots,\mathbf{x}_n \in \mathbb{R}^d$ be the set of sample points where the models are evaluated. 
We denote the vector of coefficients for the linear combination of surrogate models by 
$\mathbf{\alpha} \in \mathbb{R}^{s}$.
Define $\Omega = \{\mathbf{\alpha} \in \mathbb{R}^s | \sum_{i=1}^s \alpha_i=1, \alpha_i \geq 0 \; (i=1,\ldots,s)\}$, the set of valid weight vectors. Note that $\Omega$ is defined as an intersection of convex sets (one halfspace for each nonnegativity constraint and a hyperplane for the summation constraint) and is therefore convex. 

Define $A = (a_{ij}) \in \mathbb{R}^{s{\times}n}$ by $a_{ij} = \hat{f}_{j}(x_i)$. The matrix $A$ contains the evaluations of the surrogate models 
\tbdel{so it should be $\hat{f}_j$}
in the sample points, where each column of $A$ is an $s$-dimensional vector containing the evaluations for one of the surrogate models.

For a given vector of coefficients $\mathbf{\alpha}$ and vector $\mathbf{y}\in\mathbb{R}^n$ of desired outcomes, define 
\begin{equation}
\mathit{RMSE}(\mathbf{\alpha}, \mathbf{y}) = \sqrt{\frac{1}{n}||A\mathbf{\alpha}-\mathbf{y}||^2}
\label{eq:unweighted}
\end{equation}

Note that this definition is equivalent to the RMSE definition given in Section \ref{preliminaries}, except for a change in notation.

Consider the following problem:
\begin{equation}
    \mathbf{minimize}_{\alpha\in\Omega}\; \mathit{RMSE}(\mathbf{\alpha}, \mathbf{y})
    \label{min_rmse_eq}
\end{equation}
Or, equivalently, define:
\begin{equation}
\mathbf{\alpha}^* = \underset{\mathbf{\alpha} \in \Omega}{\arg \min} \;
    \mathit{RMSE}(\mathbf{\alpha}, \mathbf{y})
    \label{min_rmse_eq2}
\end{equation}

In the following we will show that the above problem is a convex quadratic programming problem and therefore it has a unique solution and can be solved with efficient standard solvers.

\newtheorem{theorem}{Theorem}
\begin{theorem}
The optimization problem stated in Equation \ref{min_rmse_eq} and \ref{min_rmse_eq2} is a convex quadratic problem.
\end{theorem}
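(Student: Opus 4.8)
The plan is to establish two facts and combine them: that the objective in Equation \ref{min_rmse_eq} shares its minimizers with a genuine quadratic function, and that this quadratic function is convex. Together with the convexity of $\Omega$ already noted above (an intersection of halfspaces with a hyperplane), this places the problem in the class of convex quadratic programs, which is exactly the claim.

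First I would remove the square root. Since $t \mapsto \sqrt{t}$ is strictly increasing on $[0,\infty)$ and the positive factor $\tfrac{1}{n}$ in Equation \ref{eq:unweighted} does not move the location of the minimizer, the problem in Equation \ref{min_rmse_eq} has exactly the same set of minimizers as
\begin{equation}
\min_{\mathbf{\alpha}\in\Omega}\; \|A\mathbf{\alpha}-\mathbf{y}\|^2 .
\end{equation}
This step is worth stating explicitly, because the RMSE is itself not a quadratic function; it is its squared argument that is, and the monotone reparametrization is what legitimately transfers the optimization to the quadratic setting.

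Next I would expand the squared norm into standard quadratic form. Writing $g(\mathbf{\alpha}) = (A\mathbf{\alpha}-\mathbf{y})^\top(A\mathbf{\alpha}-\mathbf{y})$ and multiplying out gives
\begin{equation}
g(\mathbf{\alpha}) = \mathbf{\alpha}^\top (A^\top A)\,\mathbf{\alpha} \;-\; 2\,\mathbf{y}^\top A\,\mathbf{\alpha} \;+\; \mathbf{y}^\top\mathbf{y},
\end{equation}
so that the Hessian is, up to a constant factor, the Gram matrix $Q = A^\top A$. The structural core of the proof is that $Q$ is positive semidefinite: for every $\mathbf{v}\in\mathbb{R}^s$ we have $\mathbf{v}^\top A^\top A\,\mathbf{v} = \|A\mathbf{v}\|^2 \ge 0$. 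Hence $g$ is convex. (If in addition the model-prediction vectors are linearly independent, i.e.\ $A$ has full column rank, then $Q$ is positive definite and $g$ is strictly convex, which yields the uniqueness of the minimizer mentioned earlier.)

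Finally I would assemble the pieces: the constant term $\mathbf{y}^\top\mathbf{y}$ can be dropped without affecting the minimizer, the remaining objective is convex quadratic by the semidefiniteness of $Q$, and $\Omega$ is a convex polyhedral set. Minimizing a convex quadratic over a convex polyhedron is by definition a convex quadratic program, completing the argument. I expect the only genuinely delicate point to be the equivalence of minimizers under the square-root transformation; once that is pinned down, the remainder is the routine Gram-matrix expansion and the elementary observation $\mathbf{v}^\top A^\top A\,\mathbf{v} = \|A\mathbf{v}\|^2 \ge 0$. I would therefore take care to state the monotonicity argument as its own step rather than leaving it implicit.
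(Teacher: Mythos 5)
Your proposal is correct and follows essentially the same route as the paper's proof: eliminate the square root by strict monotonicity of $\sqrt{\cdot}$ on $[0,\infty)$, rewrite $\|A\mathbf{\alpha}-\mathbf{y}\|^2$ as a quadratic form with Gram matrix $Q=A^{T}A$, and conclude convexity from the positive semidefiniteness of $Q$ together with the convexity of $\Omega$. The only difference is that you explicitly verify $\mathbf{v}^{T}A^{T}A\mathbf{v}=\|A\mathbf{v}\|^{2}\ge 0$, which the paper merely asserts.
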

\begin{proof}
As $n$ is fixed and $\sqrt{x}$ is a strictly monotonic function for $x \geq 0$, this minimization problem is equivalent to minimizing $||A\mathbf{\alpha}-\mathbf{y}||^2$. More specifically, for any $\mathbf{\alpha}_1, \mathbf{\alpha}_2\in\mathbb{R}^n$, we have that 
$\mathit{RMSE}(\mathbf{\alpha}_1,\mathbf{y}) \geq \mathit{RMSE}(\mathbf{\alpha}_2,\mathbf{y})$ if and only if $||A\mathbf{\alpha}_1-\mathbf{y}||^2 \geq ||A\mathbf{\alpha}_2-\mathbf{y}||^2$.
We therefore need to solve:
\begin{equation}
    \mathbf{minimize}_{\alpha\in\Omega}\; ||\mathbf{A}\mathbf{\alpha}-\mathbf{y}||^2
    \label{least_squares_eq}
\end{equation}

Define $Q = A^{T}A$ 
and $\mathbf{c}=A^{T}\mathbf{y}$. Then Problem \ref{least_squares_eq} can be written as a \emph{quadratic programming} problem in the following form:
\begin{equation}
  \mathbf{minimize}_{\mathbf{\alpha}\in\Omega}\; \left(\frac{1}{2}\mathbf{\alpha}^{T}Q\mathbf{\alpha} + \mathbf{c}^{T}\mathbf{\alpha}\right).
  \label{quadratic_programming_eq}
\end{equation}
Problem \ref{quadratic_programming_eq} is \emph{convex} as $Q$ is a positive semidefinite matrix, and the feasible set is the convex set $\Omega$.
\end{proof}
A variation of the problem, weighted RMSE ($w$RMSE) minimization, is based on the following definition:
\tbdel{The previous two sentences should be combined, e.g. "A variation of the problem, weighted RMSE ($w$RMSE) minimization,  is based on the following definition:"}
%
\begin{equation}
    wRMSE=\sqrt{\frac{1}{n}\sum_{i=1}^{n}{(\beta_i(y_i-\hat{y}_i))^2}} 
    \label{wrmse_eq}
\end{equation}

Like in the unweighted case it can be easily shown that this is a convex quadratic problem.
\begin{theorem}
Given a set of non-negative weights $\beta_i$, $i = 1,\dots, n$, 
the problem 
\begin{equation}
\mathbf{\alpha}^* = \underset{\mathbf{\alpha} \in \Omega}{\arg \min} \;
    \mathit{wRMSE}(\mathbf{\alpha}, \mathbf{y})
    \label{min_rmse_eq3}
\end{equation}
is a convex quadratic problem.
\end{theorem}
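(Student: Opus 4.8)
The plan is to mirror the argument already used for Theorem~1, absorbing the data-point weights into a diagonal matrix so that the weighted case reduces to the unweighted one. First I would recall the notation of Section~\ref{quadratic}, where $\hat{y}_i = (A\mathbf{\alpha})_i$, so that each summand of the $w$RMSE in Equation~\ref{wrmse_eq} becomes $(\beta_i(y_i-\hat{y}_i))^2 = \beta_i^2\bigl((A\mathbf{\alpha}-\mathbf{y})_i\bigr)^2$. Collecting the weights into the diagonal matrix $D = \mathrm{diag}(\beta_1,\ldots,\beta_n) \in \mathbb{R}^{n\times n}$, the sum of weighted squared errors can be written compactly as $\|D(A\mathbf{\alpha}-\mathbf{y})\|^2$.

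Second, since $n$ is fixed and $\sqrt{x}$ is strictly monotonic on $x\geq 0$, minimizing $\mathit{wRMSE}(\mathbf{\alpha},\mathbf{y})$ over $\Omega$ is equivalent to minimizing $\|D(A\mathbf{\alpha}-\mathbf{y})\|^2$, exactly as in the proof of Theorem~1. Setting $\tilde{A} = DA$ and $\tilde{\mathbf{y}} = D\mathbf{y}$, the objective becomes $\|\tilde{A}\mathbf{\alpha}-\tilde{\mathbf{y}}\|^2$, which has precisely the form of Problem~\ref{least_squares_eq} with $A$ and $\mathbf{y}$ replaced by $\tilde{A}$ and $\tilde{\mathbf{y}}$. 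Defining $Q = \tilde{A}^{T}\tilde{A} = A^{T}D^2 A$ and $\mathbf{c} = \tilde{A}^{T}\tilde{\mathbf{y}} = A^{T}D^2\mathbf{y}$, the problem takes the quadratic-programming form of Equation~\ref{quadratic_programming_eq} over the same feasible set $\Omega$.

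Third, I would establish convexity by verifying that $Q$ is positive semidefinite. For any $\mathbf{z}\in\mathbb{R}^s$ we have $\mathbf{z}^{T}Q\mathbf{z} = \mathbf{z}^{T}A^{T}D^2 A\mathbf{z} = \|DA\mathbf{z}\|^2 \geq 0$, so $Q \succeq 0$; this is the same Gram-matrix argument as in the unweighted case, now applied to $DA$ rather than $A$. Because $\Omega$ is unchanged and was already shown to be convex, Problem~\ref{min_rmse_eq3} is a convex quadratic program, which is the claim.

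Finally, I do not expect a genuine obstacle here: the whole statement is essentially a one-line reduction to Theorem~1 once the weights are encoded in $D$. The only point needing a little care is the bookkeeping of the diagonal matrix, in particular that the \emph{squared} weights $\beta_i^2$ are what appear in $Q = A^{T}D^2 A$, and the observation that the non-negativity of the $\beta_i$ assumed in the statement is consistent with, but not even required for, positive semidefiniteness, since $D^2$ carries non-negative diagonal entries for any real weights.
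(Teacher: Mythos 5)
Your proposal is correct and follows essentially the same route as the paper: absorb the data-point weights into the rows of $A$ and the entries of $\mathbf{y}$ (the paper writes this as $\tilde{A}=\beta A$, $\tilde{y}=\beta y$, which is exactly your $D A$, $D\mathbf{y}$ with $D=\mathrm{diag}(\beta_1,\dots,\beta_n)$) and then reuse the reduction to the quadratic program of Equation~\ref{quadratic_programming_eq}. Your version is in fact slightly more explicit than the paper's, since you spell out $Q=A^{T}D^{2}A$ and verify positive semidefiniteness via the Gram-matrix identity $\mathbf{z}^{T}Q\mathbf{z}=\|DA\mathbf{z}\|^{2}\ge 0$, which the paper leaves implicit.
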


\begin{proof}
To show this, the formulation (of 
Eq.~(\ref{least_squares_eq}) 
can be modified in a straightforward way. Each row $i$ of the matrix $A$ (and each entry of the vector $\mathbf{y}$) will have weight $\beta_{i}$. 
All these weights jointly form the vector $\beta$. We then multiply in Eq. (\ref{least_squares_eq}) the rows of $A$ by $\beta$ (replacing $A$ by $\tilde{A} = \beta A$), and also the corresponding entries of y (replacing them by $\tilde{y} = \beta y$). We can then continue with the same reasoning as before starting from Eq.~(\ref{quadratic_programming_eq}) , but with a new matrix $\tilde{A}$ and vector $\tilde{y}$.
\end{proof}
For positive definite Q, the ellipsoid method solves the problem in (weakly) polynomial time~\cite{kozlov1980}.

To depict the differences between results received by exactly solving the above QP problem and an heuristic method, these methods are applied to the problem of finding the optimal convex combination of three base models on a given objective function. The experiment setup relies on the experiment setup initially introduced in \cite{friesebuilding}: We use the Max-Set of Gaussian Landscape Generator (MSG)~\cite{friesebuilding,Gall06a}, to generate a four dimensional objective function using 160 Gaussian process realizations, and Kriging surrogate models using three different kernels: gaussian, spline and exponential, following the definitions of~\cite{Loph02a}, as base models. To generate the experiment data, a Latin Hypercube Design of 160 points is evaluated on the objective function. The three base models are then evaluated on these data using a Leave-One-Out Cross-Validation to obtain 160 model predictions on the objective function for each model. For the heuristic method we used a simple (1+1)-Evolution Strategy with 1/5th success rule~\cite{Beye02a}, also as proposed by~\cite{friesebuilding}.

\begin{figure} 
    \centering
    \begin{subfigure}[b]{0.48\textwidth}
        \includegraphics[width=\textwidth]{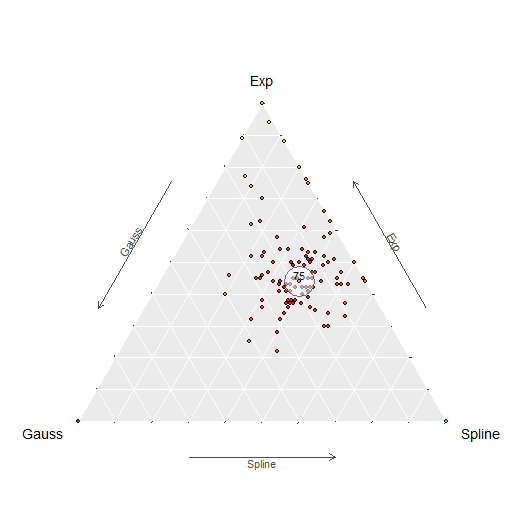}
        \caption{Search trajectory and optimal solution found by an heuristic method ((1+1)-Evolutionary Strategy)}
        \label{fig:gull}
    \end{subfigure}
    ~
    \begin{subfigure}[b]{0.48\textwidth}
        \includegraphics[width=\textwidth]{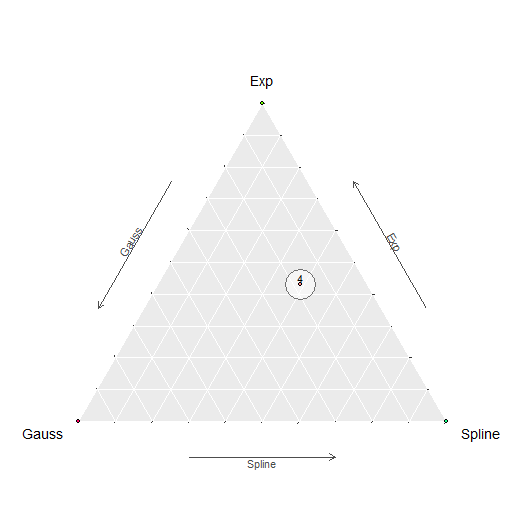}
        \caption{Optimal solution found by Quadratic Programming}
        \label{fig:tiger}
    \end{subfigure}
    \caption{All considered individuals are shown as points, the optimal solution is marked with a white circle. The Quadratic Programming solution is located close to the optimal solution found by the heuristic method.}
    \label{fig:TernaryComparison}
\end{figure}

Figure~\ref{fig:TernaryComparison} shows the results of these two weight optimizations in a ternary mixture diagram. The results of the $(1+1)$-ES and the QP method are similar, but due to its stochastic nature the $(1+1)$-ES is not guaranteed to converge, why it is preferred to use the exact solver, which has now been made available.

\section{Application}\label{Application to Drug Property Prediction}
A real world application of the optimally weighted ensemble approach is the field of drug discovery. 
\textit{De novo} drug discovery mainly targets the prediction of bio-activity. This way it can be determined whether a component is active or inactive on a specific target protein. From a chemist's perspective developing new drugs is a complex and costly task. The reason for that is that a lot of compounds fail compared to the small number of successful candidates.
There are two main goals in \emph{in silico} classification that are mainly considered. The prediction of activity on targets to identify compounds of high efficacy, and the prediction of activity on off-targets to avoid or exclude compounds that likely have unwanted side-effects. \emph{In silico} drug discovery can provide promising components in reasonable time. This procedure can minimize drug discovery costs by pre-selecting components that chemists need to test in "wet lab" experiments and exclude less promising components beforehand.

In our experiments drug data is represented as \emph{functional class fingerprints (FCFP)} \cite{Roge2010a} which is an established way of representing molecule data in a computer readable form and also retain comparability between molecules.

Promising results in combination with multiobjective optimization for classification are available \cite{Echt2019a}. Furthermore, optimally weighted ensembles were successfully applied to this kind of data \cite{Echt2021a}. 
Ensembles of heterogeneous models lead to better results than the single model approach.

The dataset is derived from the publicly available ChEMBL\footnote{Chemical database by EMBL-EBI; European Bioinformatics Institute at the European Molecular Biology Laboratory.} database. A tool written for this purpose gets all molecules with a known activity for a specific target protein from the ChEMBL API \cite{Echt2021a}. This data is converted to a dataset containing FCFPs. The activity of a molecule is defined by a proprietary continuous pChembl value~\cite{Bento2014a}.

The dataset we analysed further has the ChEMBL Id \emph{CHEMBL4159} and has the \emph{Endoplasmic reticulum-associated amyloid beta-peptide-binding protein} as a target. This target was mainly chosen because of the large count ($>$20,000) of known molecules that have an activity on this target.

To determine the influence of clustering and therefore reducing the weight of single points in cluster-regions we analysed the weights in detail. The plots in Figure \ref{fig:weightsCHEMBL4159} show the distribution of different weights in our \emph{CHEMBL4159} dataset. We have a total of 18\,452 points, 9\,572 of these  have a weight of 1. Since the histogram clearly shows that the weights tend towards a value of 1 and almost no values are below 0.50, there is not much clustering inside the CHEMBL4159 dataset. To further check this we generated a test-set with maximized coverage of the data points. This lead to almost identical results. The plots in Figure \ref{fig:weightsCHEMBL4159dist} show the distribution of different weights in our CHEMBL4159 dataset with a maximized distribution of data in the test set. We have a total of 18\,452 point of which 9\,586 have a weight of 1.

This findings are also reflected in the ROC (Receiver Operating Characteristics) plots in Figures \ref{fig:ROC_4159_mean_opt_RMSE} and \ref{fig:ROC_4159_mean_opt_RMSE_weighted}. The plots in Figure \ref{fig:ROC_4159_mean_opt_RMSE} show the mean result of unweighted RMSEs and the plots in Figure \ref{fig:ROC_4159_mean_opt_RMSE_weighted} show the mean result of weighted RMSEs. The results are almost identical and the optimal point defined by the Youden-Index J or Youden’s J statistic which was
originally proposed in \cite{Youd1950a} is nearly in the same spot. This results from the lack of clustering inside the CHEMBL4159 dataset where a weighting of point is obviously not necessary. This shows that the benefit of using the weighted RMSE is negligible when the data is distributed homogeneously. The weighted RMSE approach does not guarantee better results and mainly depends on the existence of clusters in the data. However, since it is generally not known beforehand how the data is distributed, the wRMSE should be the means of choice.

Also, experiments were conducted to determine if the quadratic programming approach leads to equally good or better results. The plots in Figure \ref{fig:ROC_4159_mean_opt_RMSE} and Figure \ref{fig:ROC_4159_mean_opt_RMSE_weighted} show these results. There is also the optimal Youden Index point marked. The results are similar to the ensemble results from the ES approach. What can be deduced from the plots the optimal Youden Index is almost identical with the benefit of a hugely improved runtime.

As described in Echtenbruck et al.~\cite{Echt2021a}, the mixture of models could indeed improve the results. One model that was used as part of the ensemble is an optimized ranger/random forest. This optimized model contributed hugely to the overall result with around 90\%, but the overall result could be improved with an added Lasso model. A Ridge model was also used, but could not contribute in a meaningful manner. It was excluded in 9 out of 10 experiments. With standard models that were not optimized for the problem beforehand, the ensemble has a more even distribution of contributing models. These results are confirmed when using the quadratic programming approach instead of an (1+1)-ES. Nevertheless, better results could be achieved if  all the contributing models are optimized beforehand.

\begin{figure}
    \centering
    \begin{subfigure}[b]{0.48\textwidth}
        \includegraphics[width=\textwidth]{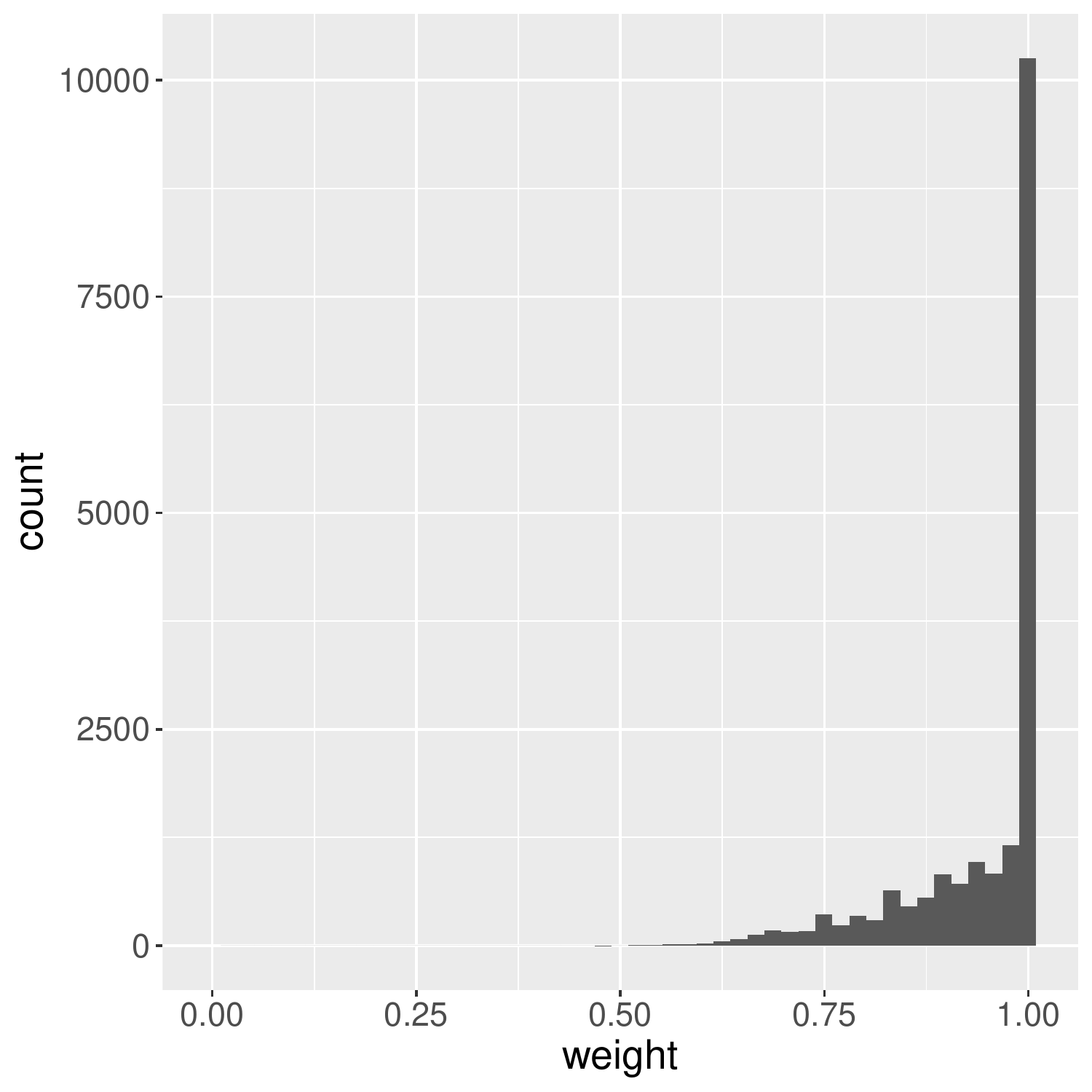}
        \caption{Weights applied to the points of the CHEMBL4159 dataset. Nearly 10\,000 points are weighted.}
    \end{subfigure}
    ~
    \begin{subfigure}[b]{0.48\textwidth}
        \includegraphics[width=\textwidth]{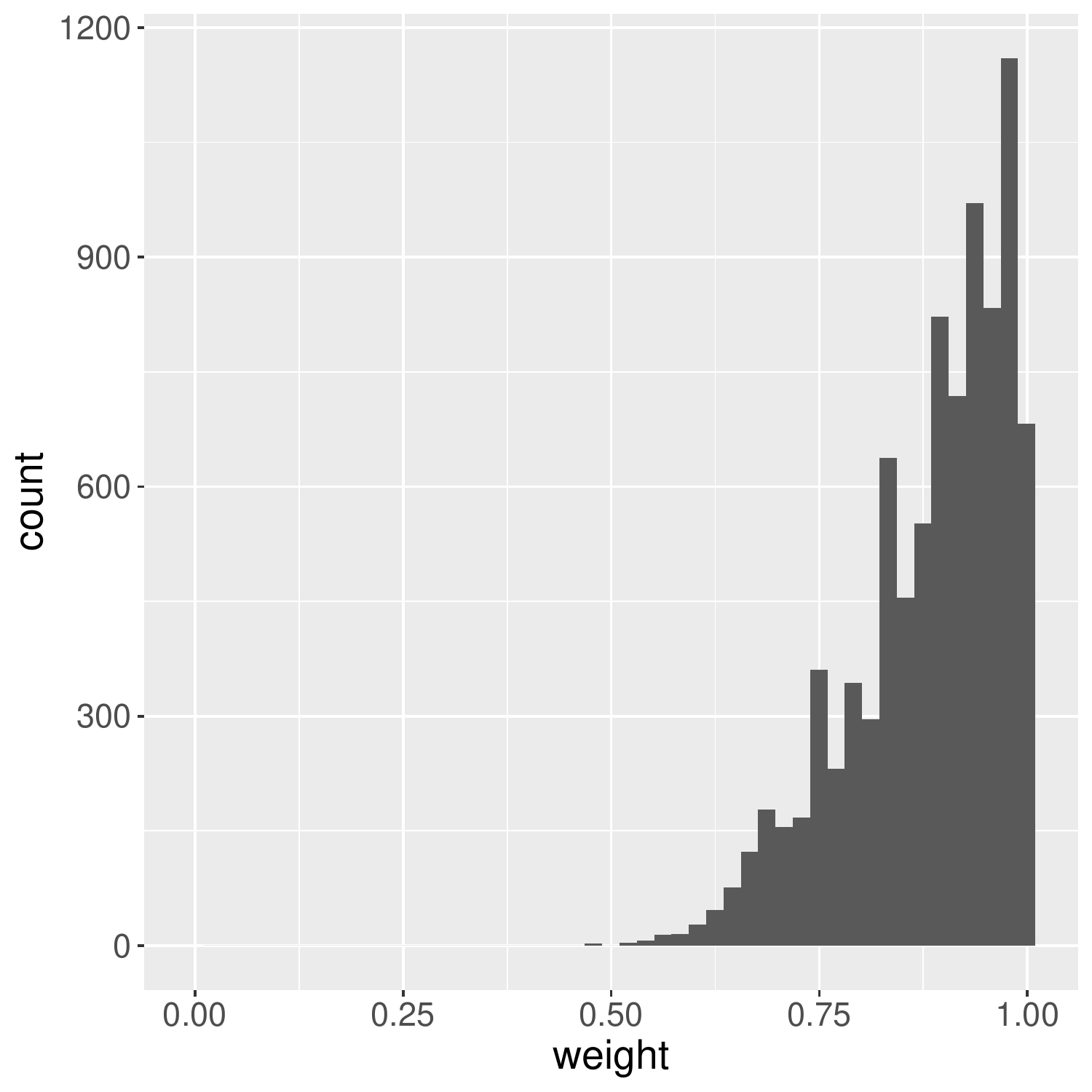}
        \caption{Weights applied to the points of the CHEMBL4159 dataset, points with weight=1 excluded.}
    \end{subfigure}
    \caption{Histogram of the applied weights on the CHEMBL4159 dataset.}\label{fig:weightsCHEMBL4159}
\end{figure}

\begin{figure}
    \centering
    \begin{subfigure}[b]{0.48\textwidth}
        \includegraphics[width=\textwidth]{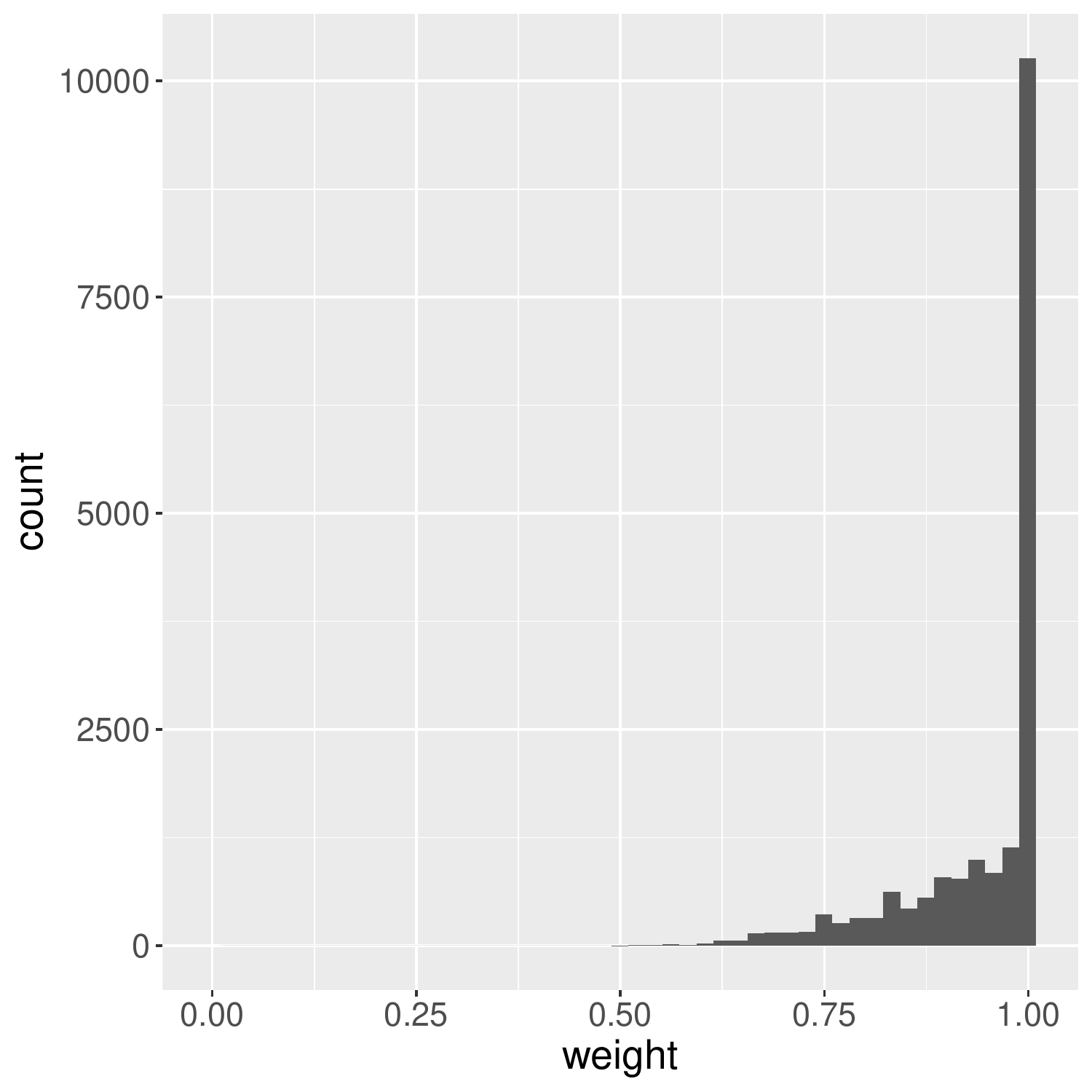}
        \caption{Weights applied to the points of the CHEMBL4159 dataset with max coverage inside the dataset. Nearly 10\,000 points are weighted.}
    \end{subfigure}
    ~
    \begin{subfigure}[b]{0.48\textwidth}
        \includegraphics[width=\textwidth]{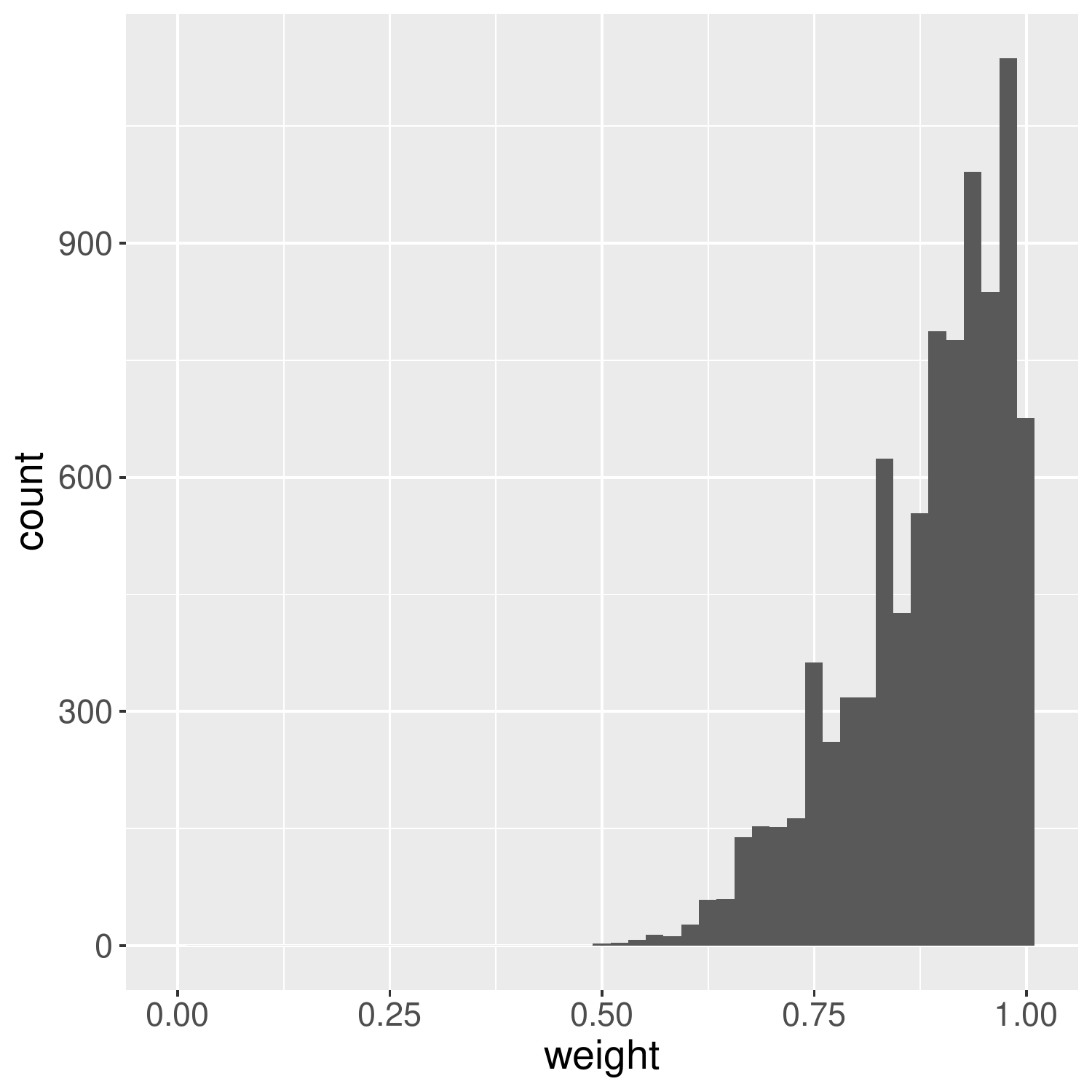}
        \caption{Weights applied to the points of the CHEMBL4159 dataset with optimized distribution, points with weight=1 excluded.}
    \end{subfigure}
    \caption{Histogram of the applied weights on the CHEMBL4159 dataset with max coverage inside the dataset.}\label{fig:weightsCHEMBL4159dist}
\end{figure}

\begin{figure}
    \centering
    \begin{subfigure}[b]{0.48\textwidth}
        \includegraphics[width=\textwidth]{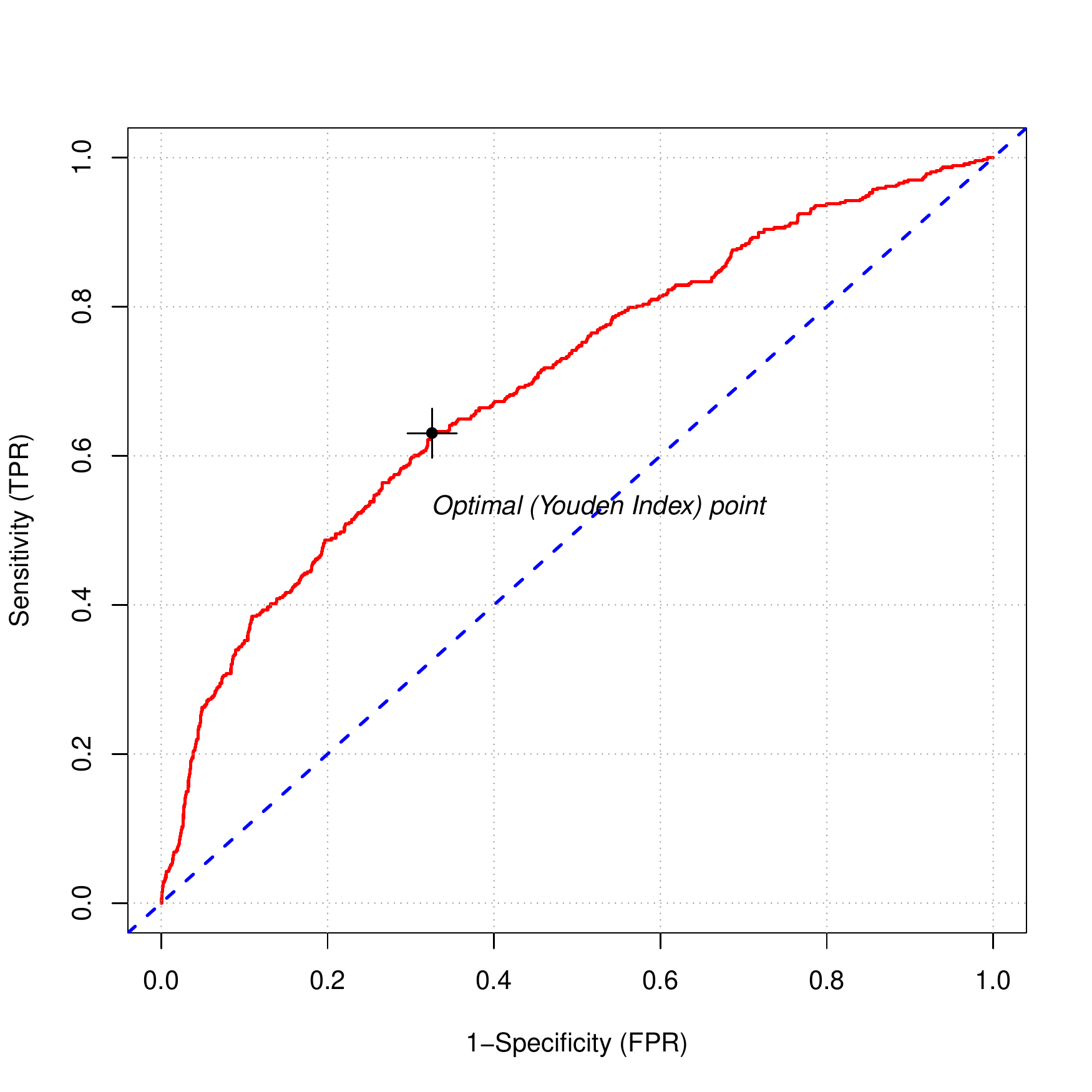}
        \caption{Mean curve calculated by ES with unweighted RMSE and Youden Index marked}
    \end{subfigure}
    ~
    \begin{subfigure}[b]{0.48\textwidth}
        \includegraphics[width=\textwidth]{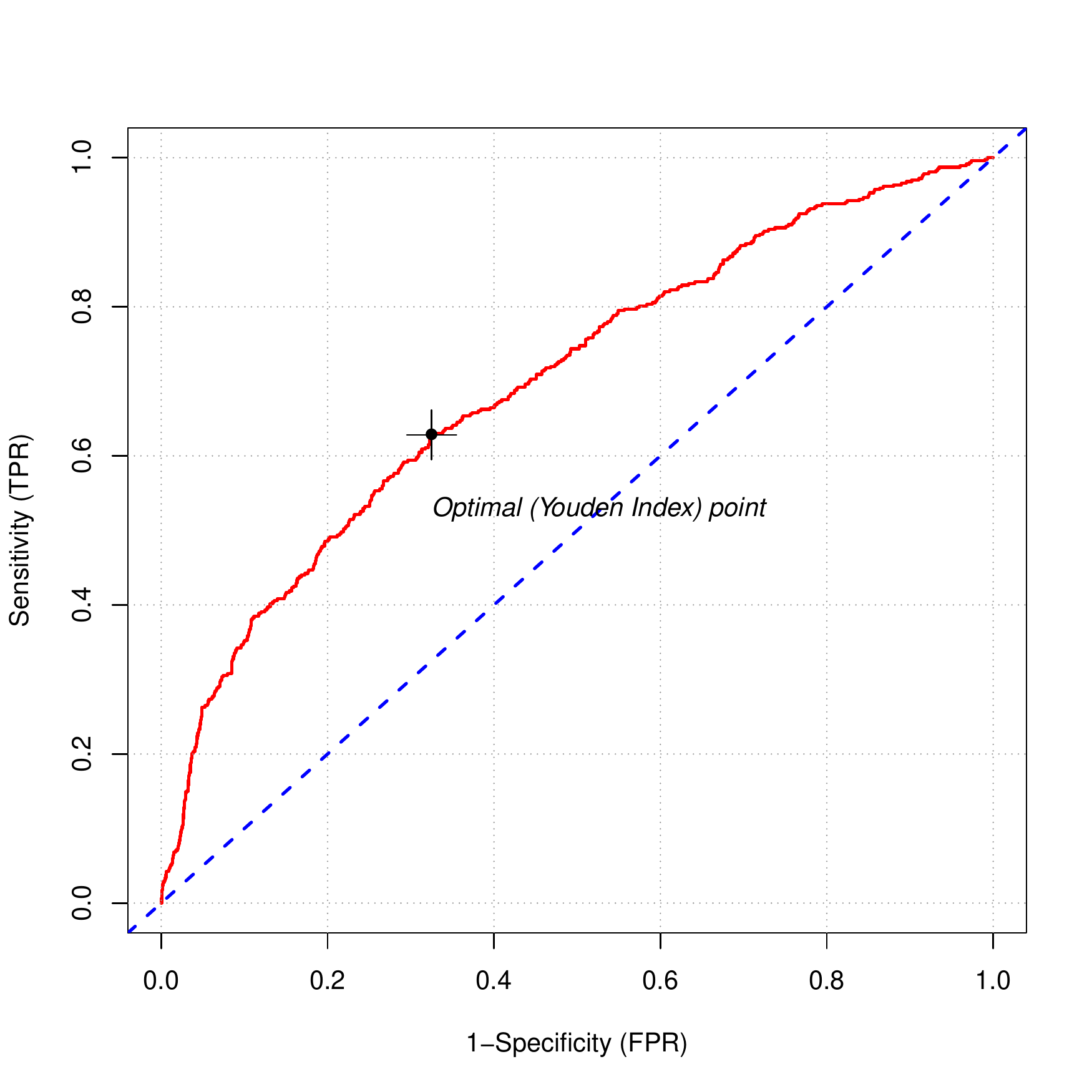}
        \caption{Mean curve calculated by QP with unweighted RMSE and Youden Index marked}
    \end{subfigure}
    \caption{The mean result of all 10 experiments with Youden Index marked (unweighted RMSE, max coverage in test set).\tbdel{would it make sense to put these into one figure, so one can see how close they are? Also for the next figure?}}\label{fig:ROC_4159_mean_opt_RMSE}
\end{figure}

\begin{figure}
    \centering
    \begin{subfigure}[b]{0.48\textwidth}
        \includegraphics[width=\textwidth]{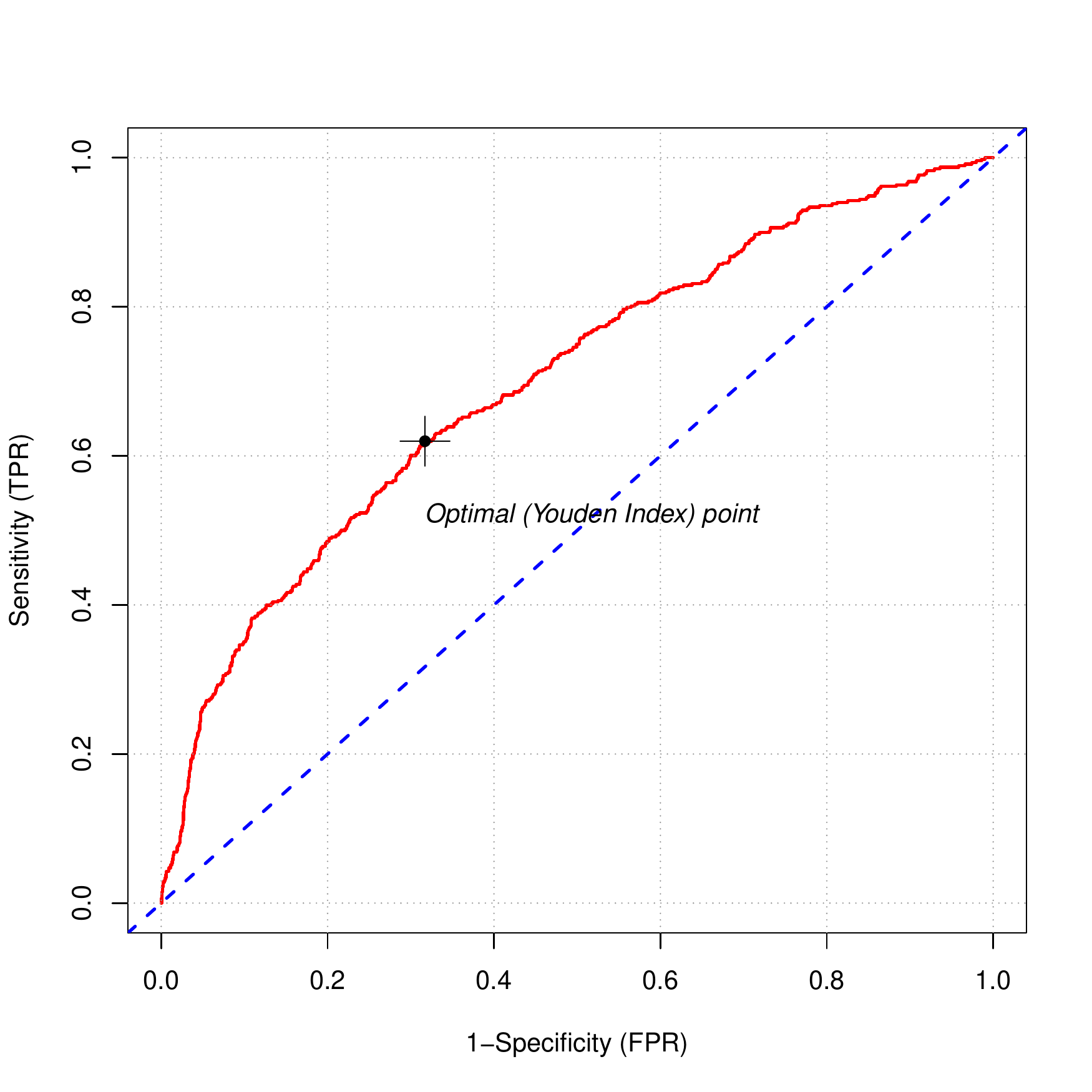}
        \caption{Mean curve calculated by ES with weighted RMSE and Youden Index marked}
    \end{subfigure}
    ~
    \begin{subfigure}[b]{0.48\textwidth}
        \includegraphics[width=\textwidth]{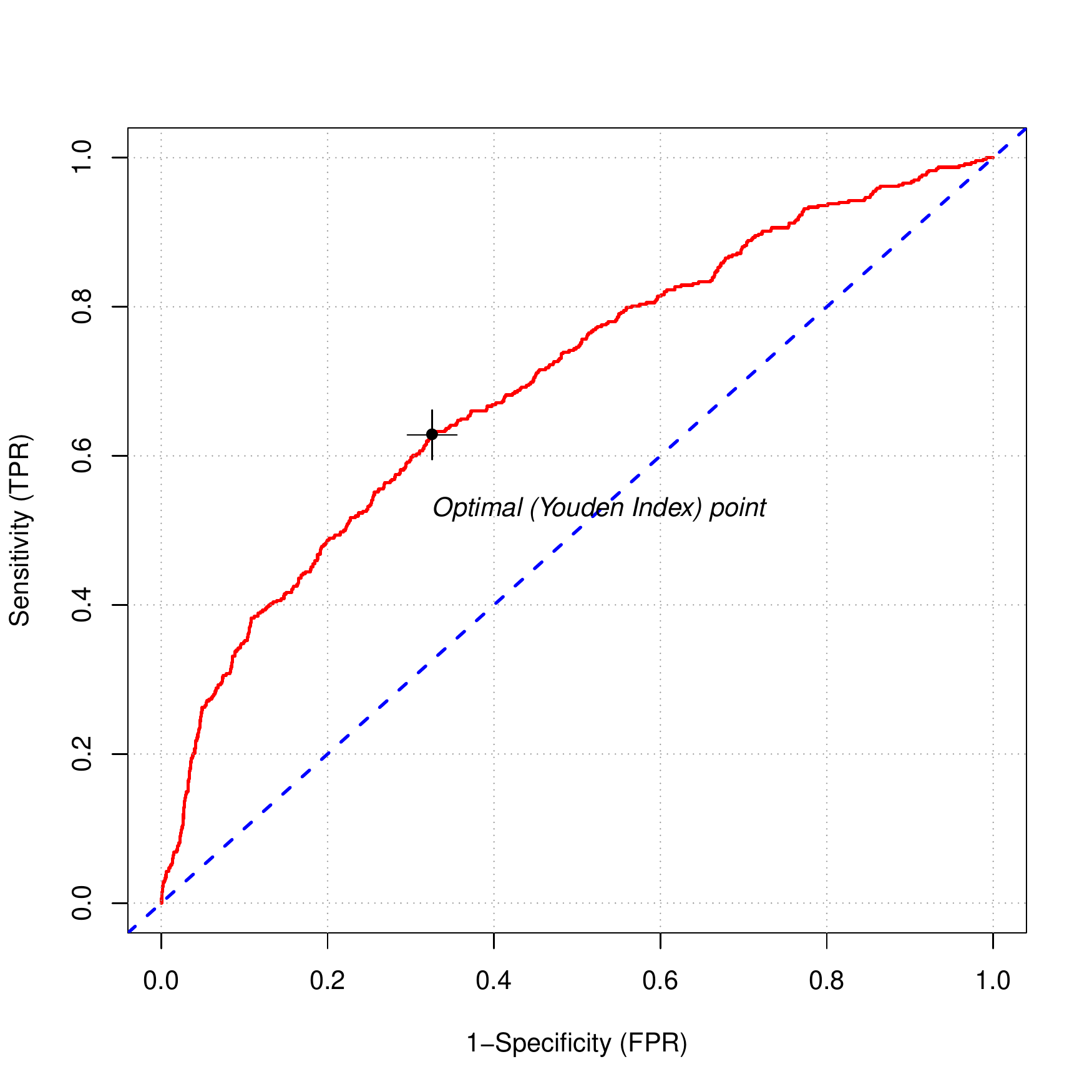}
        \caption{Mean curve calculated by QP with weighted RMSE and Youden Index marked}
    \end{subfigure}
    \caption{The mean result of all 10 experiments with Youden Index marked (weighted RMSE, max coverage in test set).}\label{fig:ROC_4159_mean_opt_RMSE_weighted}
\end{figure}

\begin{figure}
    \centering
    \begin{subfigure}[b]{0.48\textwidth}
        \includegraphics[width=\textwidth]{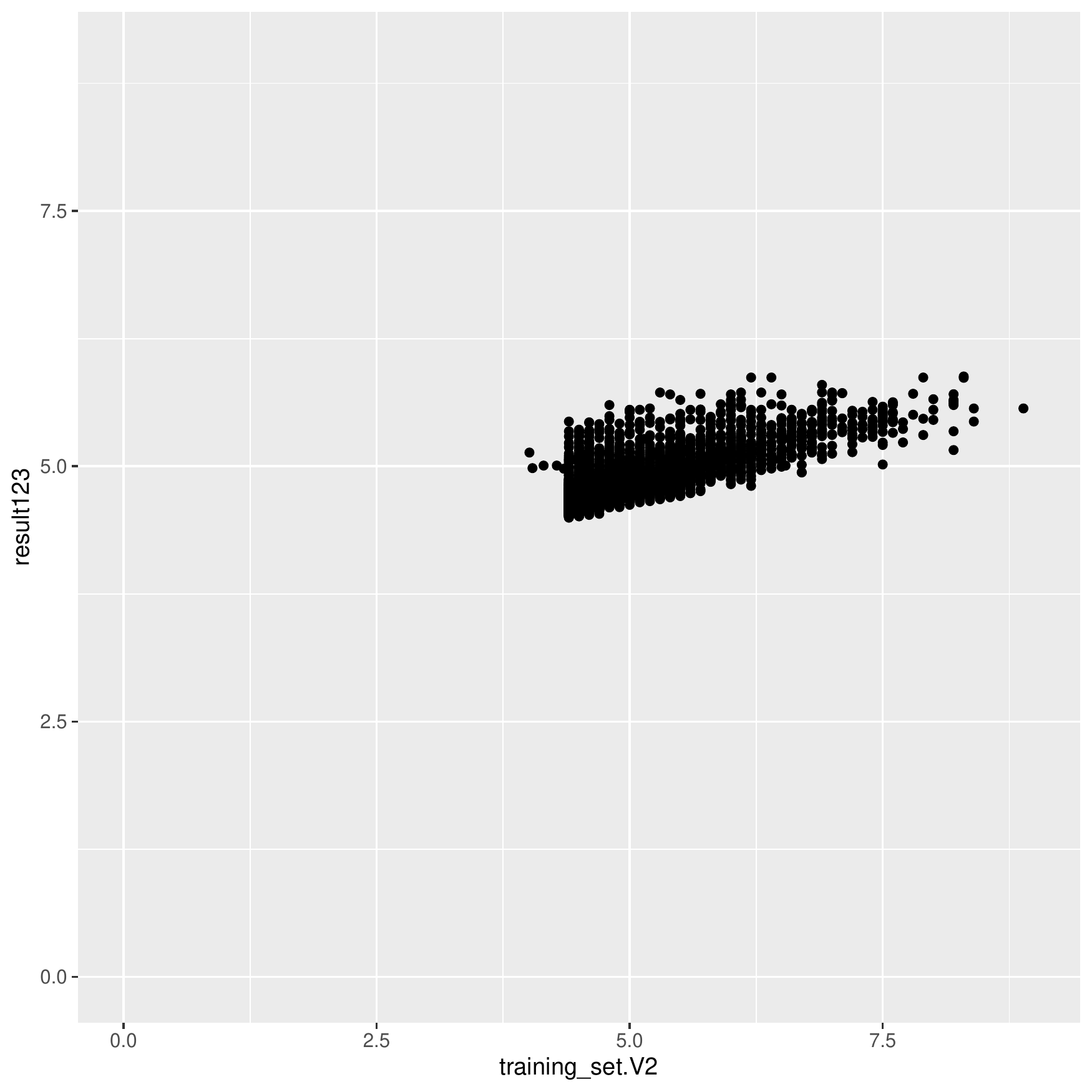}
    \end{subfigure}
    ~
    \begin{subfigure}[b]{0.48\textwidth}
        \includegraphics[width=\textwidth]{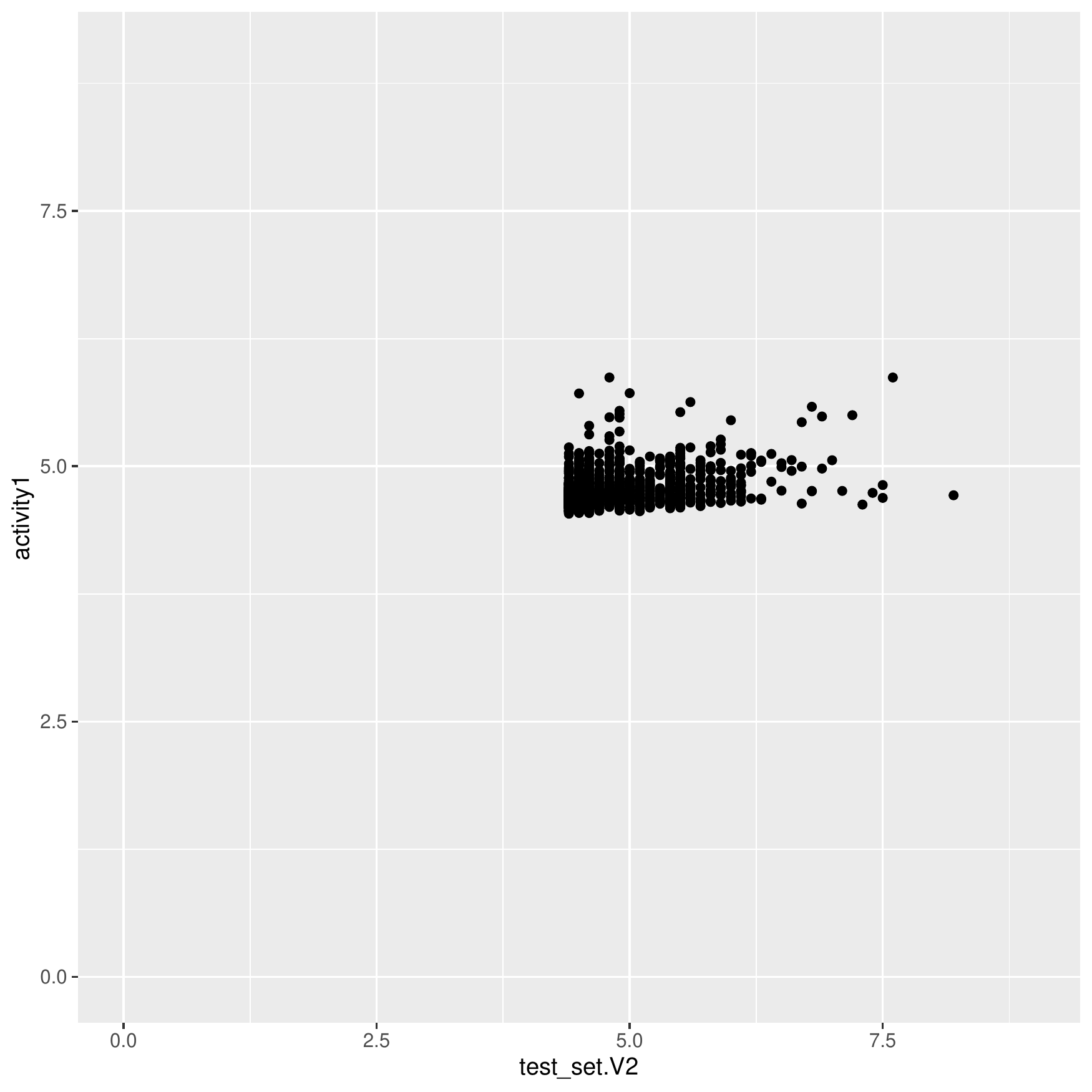}
    \end{subfigure}
    \caption{Actual values compared to predicted values for training set (left) and test set (right).\tb{I would szggest to focus on the range above 4, i.e., rescale the axes, to show more detail. Also, axis labels in the figures need to be improved, make them more clear and related to terminology used in the text. Moreover, some quantitative evaluation of the quality of the models is missing, e.g., $R^2$ or AUC or whatever, to compare the methods and evaluate this final result.}}\label{fig:y-y_Plot}
\end{figure}

Since overfitting can be a problem in model building when the model corresponds too closely to the training set, we analysed if this is the case in our ensemble for drug datasets. We used $y-y$-Plots
\tbdel{I know what this means, but why not saying $y$-$\hat{y}$-plots, for clarity?} \cite{emmerich2006single} to plot the actual value and the predicted value as x- and y-axis. The plots for the training and test set are depicted in Figure \ref{fig:y-y_Plot}.







\section{Discussion}\label{ResultsDiscussion}

In summary, the results on the approximation of mathematical benchmark functions and on drug property prediction show that the usage of quadratic programming has three advantages compared to the state-of the art heuristic (1+1)-ES method:
\begin{itemize}
\item it offers a run-time benefit providing a solution almost instantaneously, while the (1+1)-ES took several seconds to converge; 
\item it provides slightly better results regarding model weighted models;
\tb{model weighted models?}
\item it is more reliable, because  "bad surprises" are avoided in cases where the heuristic method might converge prematurely.
\end{itemize}

Moreover, the results on the drug discovery data show that the used of heterogeneous data is possible, and the input does not necessarily be from a continuous domain, as in the previously published results of optimally weighted model mixtures.
Even though the results of the weighted RMSE method is better with the exact approach than with the heuristic approach, it is surprisingly not always the case that it outperforms the non-weighted method that does not compensate for the clustering of data points.


\section{Summary and Outlook}\label{SummaryOutlook}

This research article proposes an exact and efficient convex quadratic programming method on how to find optimally weighted ensembles of models, using convex combinations. The approach is elegant, yet powerful and is shown to yield better results than using only model selection strategies. We also study how we can compensate for overrepresentation of highly clustered regions in the training data and suggested density-based weighting, which can be integrated straightforwardly, and without introducing non-convexity, in the quadratic programming method. 

The results on the approximation of mathematical benchmark functions and on drug property prediction show that the usage of quadratic programming has not only a run-time benefit, but also leads to slightly better results regarding model weights, if compared with heuristic methods. However, apart from mere benchmarking results, the replacement of the heuristic method by a exact convex quadratical programming method has also the advantage that "bad surprises" are avoided in cases where the heuristic method might converge prematurely and thereby we can with the new exact method provide a more reliable approach.

The idea of using weighted training data, so-far, did not yield a strong advantage in the observed test results. But on the other hand the results are also not much worse, and we showed that the complexity of the mathematical programming problem increases only slightly and it remains convex, and thus efficiently solvable. Future work will need to be conducted to gain a better understanding of how the clustering of data points may influence the result of the optimization, and if alternative weighting schemes can be derived that take this effect better into account.

When it comes to applications, it will be beneficial to test the approach on a broader range of problems and machine learning methods. Our first results on heterogeneous data for drug property prediction show first promising result in that direction. 

\bibliographystyle{abbrv}

\end{document}